\documentclass{ieeeaccess}

\providecommand{\xfigwd}{0pt}
\usepackage{cite}
\usepackage{amsmath,amssymb,amsfonts}
\usepackage{mathtools}
\usepackage{amsthm}
\usepackage{algorithm}
\usepackage{algpseudocode}
\usepackage{graphicx}
\usepackage{subfigure}
\usepackage{booktabs}
\usepackage{placeins}
\usepackage{microtype}
\usepackage[hidelinks]{hyperref}
\usepackage[capitalize,noabbrev]{cleveref}
\usepackage{textcomp}

\def\BibTeX{{\rm B\kern-.05em{\sc i\kern-.025em b}\kern-.08em
    T\kern-.1667em\lower.7ex\hbox{E}\kern-.125emX}}
 
\theoremstyle{plain}
\newtheorem{theorem}{Theorem}[section]
\newtheorem{proposition}[theorem]{Proposition}

\theoremstyle{definition}

\theoremstyle{remark}
\newtheorem{remark}[theorem]{Remark}

\begin{document}
\history{Preprint. This work has been submitted for peer review.}
\doi{}

\title{Similarity Weighted Aggregation with Global Differential Privacy for Federated Brain Lesion Segmentation}

\author{\uppercase{Muhammad Irfan Khan}\authorrefmark{1},
\uppercase{Eero Lehtonen}\authorrefmark{1},
\uppercase{Joni Obradovic}\authorrefmark{1},
\uppercase{Elina Kontio}\authorrefmark{1},
\uppercase{Esa Alhoniemi}\authorrefmark{1},
\uppercase{Suleiman A. Khan}\authorrefmark{1},
\uppercase{and Mojtaba Jafaritadi}\authorrefmark{1}}

\address[1]{Turku University of Applied Sciences, Turku 20520, Finland (e-mail: irfan.khan@turkuamk.fi, eero.lehtonen@turkuamk.fi, 
joni.p.obradovic@turkuamk.fi,
elina.kontio@turkuamk.fi, esa.alhoniemi@turkuamk.fi, suleiman.alikhan@turkuamk.fi, mojtaba.jafaritadi@turkuamk.fi)}

\tfootnote{This work was carried out as part of a research project at Turku University of Applied Sciences. This work was supported by Business Finland under Grant 1337/31/2024.}

\markboth
{IEEE Access}
{Khan \headeretal: DP-SimAgg for Privacy-Preserving Federated Brain Lesion Segmentation}

\corresp{Corresponding author: Muhammad Irfan Khan (e-mail: irfan.khan@turkuamk.fi).}

\begin{abstract}
Federated Learning (FL) enables collaborative training of machine learning models across multiple institutions without sharing sensitive data, making it particularly suitable for medical imaging applications. However, heterogeneous data distributions across institutions and potential information leakage through model updates remain important challenges. In this work, we propose DP-SimAgg, a privacy-preserving federated learning framework that integrates similarity-weighted aggregation with a server-side differential privacy mechanism. The proposed method applies L2 clipping to bound collaborator updates, computes similarity-based aggregation weights to mitigate the effects of non-IID data distributions, and injects calibrated Gaussian noise at the central server and provides per-round privacy guarantees under the assumed sensitivity bound. 
The framework is implemented using Intel's OpenFL platform and evaluated on the FeTS 2022 dataset consisting of 1251 multi-modal MRI scans for brain tumor segmentation. Experimental results demonstrate that DP-SimAgg maintains competitive segmentation performance while providing privacy protection. For instance, under a strict per-round privacy budget ($\epsilon = 1$, cumulative $\epsilon_{\text{total}} = 20$ over 20 rounds), the method achieves Dice scores of 0.6357, 0.5305, and 0.5274 for the enhancing tumor (ET), tumor core (TC), and whole tumor (WT) regions, respectively. With a more relaxed per-round budget ($\epsilon = 10$, cumulative $\epsilon_{\text{total}} = 200$), performance approaches that of the non-private baseline 
while incorporating a central Gaussian mechanism with per-round $(\epsilon,\delta)$-DP accounting under the assumed sensitivity bound.
These results highlight the potential of DP-SimAgg for enabling privacy-preserving collaborative learning in medical imaging applications.
\end{abstract}

\begin{keywords}
differential privacy, federated learning, medical imaging, brain tumor segmentation, aggregation
\end{keywords}

\titlepgskip=-15pt
\maketitle

\section{Introduction}
\label{sec:introduction}

Collaborative machine learning across medical institutions holds great promise for improving diagnostic accuracy and clinical decision support, yet it is fundamentally constrained by two intertwined challenges: data scarcity and privacy regulation. In medical imaging, developing high-performance deep learning models requires large, heterogeneous, and carefully annotated datasets~\cite{varoquaux2022machine}. Brain tumor segmentation is a clinically critical example: early and precise delineation of glioblastoma (GBM) sub-regions from multi-parametric MRI is strongly associated with treatment planning quality and patient outcomes~\cite{sheller2020federated,bakas_advancing_2017}. 
Yet patient data are tightly protected under legislation such as HIPAA and GDPR~\cite{may2010hipaa2, tinja}, making centralized data pooling across institutions legally restricted or outright prohibited. Within a single institution, annotated datasets are often too small and insufficiently diverse to train generalizable models, especially for rare pathologies~\cite{ng2021federated}.
 
Federated learning (FL) was introduced precisely to reconcile this tension~\cite{mcmahan2017communication}. In the standard FL protocol, a central server distributes a global model to a set of collaborating institutions, each of which trains locally on its private data and returns only model parameter updates to the server. The server aggregates these updates, most commonly via weighted averaging, and broadcasts the refined global model for the next round. At no point is raw patient data transmitted, offering a principled alternative to centralized training under privacy constraints~\cite{wang_yurochkin_papailiopoulos_khazaeni_2020, khan_regsimagg23}.
 
Despite this, FL in its standard form faces two persistent challenges that are particularly severe in the medical imaging domain. The first is \emph{statistical heterogeneity}: patient populations, imaging protocols, scanner hardware, and annotation conventions differ substantially across institutions, producing non-independent and non-identically distributed (non-IID) local datasets~\cite{zhao2018federated, Kairouz2019}. When local data distributions diverge significantly, standard federated averaging (FedAvg) is prone to \emph{client drift}~\cite{karimireddy2020scaffold}, a phenomenon where locally optimized updates pull the global model in conflicting directions, slowing convergence and degrading final model quality. The second challenge is \emph{privacy leakage}: transmitting model gradients or parameters can expose sensitive information about the underlying training data. Gradient inversion attacks~\cite{https://doi.org/10.48550/arxiv.2003.14053} and membership inference attacks~\cite{DBLP:journals/corr/ShokriSS16} have demonstrated that an adversarial server or eavesdropper can recover private patient attributes or even approximate training samples from shared model updates alone. Vanilla FL thus provides no formal privacy guarantee~\cite{truong_privacy_2021, rodriguez2023survey, https://doi.org/10.48550/arxiv.2003.14053}.

Addressing both challenges simultaneously requires mechanisms that are sensitive to update quality across heterogeneous collaborators while also providing mathematically certified privacy guarantees. In this work, we propose \textbf{DP-SimAgg}, a federated learning framework that directly addresses these two challenges through a tightly integrated design: a \emph{similarity-weighted aggregation} (SimAgg)~\cite{khan2021adaptive}  scheme that down-weights divergent collaborator updates to mitigate client drift under non-IID conditions, combined with a \emph{server-side Gaussian differential privacy} mechanism that provides formal $(\epsilon, \delta)$-DP guarantees under an empirically estimated sensitivity per communication round. The key insight is that these two components are empirically observed to be mutually reinforcing: L2 norm clipping, required for bounding the DP sensitivity, simultaneously prevents outlier updates from corrupting the global model, and the similarity weights are computed \emph{after} clipping, ensuring that the DP guarantee is not undermined by the weighting scheme. Our prior work demonstrated that SimAgg  significantly outperforms FedAvg and related methods in non-IID medical imaging settings, including winning multiple benchmarking challenges ~\cite{zenk2025towards,Linardos_2025}. However, its effectiveness under differential privacy constraints has not been studied. This work fills that gap by systematically evaluating the robustness of similarity-weighted aggregation under varying privacy budgets, and by integrating it with a global differential privacy mechanism to analyze the trade-off between model utility and privacy in federated brain tumor segmentation.
 
The framework is implemented within Intel's OpenFL platform~\cite{reina_openfl_2022} and evaluated on the FeTS~2022 federated brain tumor segmentation benchmark~\cite{zenk2025towards}, comprising 1251 multi-institutional multi-parametric MRI scans across 33 collaborating sites. Experiments demonstrate that DP-SimAgg achieves competitive segmentation performance across a range of privacy budgets, with near-baseline Dice scores at a per-round budget of $\epsilon_0 = 10$ (cumulative $\epsilon_{\text{total}} = 200$ over 20 rounds) and meaningful segmentation quality even under strict privacy ($\epsilon_0 = 1$, $\epsilon_{\text{total}} = 20$), at computational cost comparable to non-private training.
 
The specific contributions of this paper are as follows:
\begin{itemize}
    \item We propose \textbf{DP-SimAgg}, a federated aggregation framework that unifies similarity-weighted update selection with central Gaussian differential privacy for multi-institutional medical image segmentation.
    \item We design a server-side, empirically sensitivity-aware privacy mechanism combining per-collaborator L2 norm clipping, similarity-based aggregation weighting, empirical L2 sensitivity estimation via neighboring-dataset simulation, and calibrated Gaussian noise injection --- all applied without modifying collaborator-side training.
    \item We provide complete algorithmic specifications (Algorithms~1--3) with formal $(\epsilon_0, \delta_0)$-DP guarantees per round and explicit sequential composition accounting for the multi-round training setting.
    \item We conduct an empirical evaluation on the FeTS~2022 benchmark, quantifying the privacy--utility trade-off across privacy budgets and demonstrating computational parity with non-private federated training.
\end{itemize}
 
The remainder of this paper is organized as follows. Section~\ref{sec:relatedwork} surveys related work on FL aggregation, differential privacy in FL, and privacy-preserving medical imaging. Section~\ref{sec:method} describes the proposed framework and algorithmic design. Section~\ref{sec:results} presents the experimental evaluation, and Section~\ref{sec:discussion} discusses the results, limitations, and future directions.
 
\section{Related Work}
\label{sec:relatedwork}
 
\subsection*{Federated Learning Aggregation and Heterogeneity}
 
The canonical FL aggregation algorithm, FedAvg~\cite{mcmahan2017communication}, computes a weighted average of collaborator updates proportional to local dataset size. While effective under IID conditions, FedAvg degrades under non-IID data distributions due to client drift, where heterogeneous local optima cause diverging update directions~\cite{zhao2018federated, Kairouz2019}. Numerous aggregation strategies have been proposed to address this. FedProx~\cite{li2020federated} adds a proximal regularization term to each collaborator's local objective to limit divergence from the global model. SCAFFOLD~\cite{karimireddy2020scaffold} uses control variates to correct for client drift directly in the gradient update. FedNova~\cite{wang2020tackling} normalizes local updates by the number of local steps to eliminate objective inconsistency. These methods, however, operate on the \emph{collaborator} side and require modifications to local training procedures, which may be impractical in heterogeneous clinical environments where institutions use different software stacks or lack the infrastructure for custom training loops.
 
Server-side aggregation strategies offer a more deployment-friendly alternative. 
Krum and Multi-Krum~\cite{blanchard2017machine} select updates that are closest to other received updates, providing robustness against Byzantine collaborators. FedMedian~\cite{yin2018byzantine} uses coordinate-wise median aggregation for similar effect. Similarity-based aggregation, as developed in our prior work ~\cite{khan_regsimagg23} and ~\cite{khanadaptive}, weights collaborator contributions by their distance from the global consensus, assigning higher influence to updates that are consistent with the aggregate direction. This approach is purely server-side and requires no modification to local training, making it well-suited for real-world federated medical imaging deployments. The present work extends this line to incorporate formal differential privacy guarantees.
 
\subsection*{Differential Privacy in Federated Learning}
 
Differential privacy (DP) provides a rigorous mathematical framework for limiting information leakage from published model outputs~\cite{10.1007/11787006_1, Dwork2013, Dwork2011}. A mechanism $\mathcal{M}$ satisfies $(\epsilon, \delta)$-DP if its output distribution changes by at most a multiplicative factor $e^\epsilon$ (plus additive slack $\delta$) when any single individual's data is added or removed from the input. The Gaussian mechanism~\cite{abadi2016deep} achieves this by adding isotropic Gaussian noise calibrated to the L2 sensitivity of the target function.
 
The integration of DP with FL was formalized by McMahan et al.~\cite{abadi2016deep} in DP-FedAvg, which clips per-sample gradients and adds Gaussian noise at the server, using the moments accountant for tight privacy composition. However, DP-FedAvg was designed for the local DP setting and applies clipping and noise at the \emph{collaborator} level during local training, which requires access to per-sample gradients and imposes significant computational overhead. Geyer et al.~\cite{geyer2017differentially} proposed an alternative user-level DP scheme that clips and noises full model updates at the server rather than per-sample gradients, reducing collaborator-side complexity. Subsequent work has explored local DP~\cite{DBLP:journals/corr/abs-1905-02383}, where collaborators independently perturb their updates before transmission, at the cost of substantially higher noise and reduced utility compared to central DP for the same privacy budget. In this work, we adopt the \emph{central} (server-side) DP model, consistent with Geyer et al.~\cite{geyer2017differentially}, which provides stronger utility guarantees under the standard trusted-aggregator assumption.
 
Privacy accounting for multi-round FL is a non-trivial problem. Naive 
sequential composition yields a linear budget growth of $T\epsilon_0$ over $T$ rounds. The moments accountant~\cite{abadi2016deep} and R\'{e}nyi DP~\cite{mironov2017renyi} provide substantially tighter bounds. In this work we report the basic sequential composition bound for transparency, and note that tighter accounting via these methods is a straightforward improvement.
 
\subsection*{Privacy-Preserving Federated Medical Imaging}
 
The application of FL to medical image segmentation has grown substantially since Sheller et al.~\cite{sheller2020federated} demonstrated FL for brain tumor segmentation without centralizing data. Subsequent work has addressed non-IID challenges in multi-site medical imaging~\cite{Linardos_2025}, collaborator selection strategies~\cite{khan2024electioncollaboratorsreinforcementlearning, khan2024recommenderenginedrivenclient}, and the FeTS benchmark for standardized federated evaluation~\cite{zenk2025towards}.
 
The intersection of DP and medical FL has received increasing attention. Kaissis et al.~\cite{kaissis2020} demonstrated DP-FL for chest radiograph classification and highlighted the sensitivity of medical imaging models to noise injection. Bernstein et al.~\cite{bernstein2018signsgd} and Jin et al.  ~\cite{jin2025noisy} studied signSDG and DP for imaging datasets, noting that high-dimensional inputs complicate sensitivity estimation. Critically, neither work addresses the interaction between aggregation strategy and DP: they apply noise to standard FedAvg aggregates, leaving the non-IID problem unaddressed. Roth et al.~\cite{eisenmann2023biomedicalimageanalysiscompetitions} reviewed privacy challenges in biomedical image analysis competitions, noting that gradient-based attacks remain viable against standard FL in medical settings. Our work differs from all of the above in that it jointly optimizes the aggregation scheme for non-IID robustness and applies calibrated DP noise, treating both problems as inseparable within a single server-side mechanism.
 
\subsection*{Positioning of This Work}
 
Table~\ref{tab:related} summarizes the key design choices of the most closely related methods. Relative to prior work, DP-SimAgg is, to our knowledge, the first federated aggregation framework to explicitly combine \emph{server-side} similarity-weighted aggregation with \emph{server-side} central Gaussian DP in the context of multi-institutional medical image segmentation, without requiring any modification to collaborator-side training procedures.
 
\begin{table}[t]
\caption{Comparison of related federated learning methods along key design axes. Server-side only indicates no modification to collaborator training is required. Non-IID robust indicates explicit mechanism for heterogeneous data. Formal DP indicates $(\epsilon,\delta)$-DP guarantee.}
\label{tab:related}
\centering
\resizebox{\columnwidth}{!}{
\begin{tabular}{lccc}
\toprule
Method & Server-side only & Non-IID robust & Formal DP \\
\midrule
FedAvg~\cite{mcmahan2017communication}         & \checkmark & $\times$   & $\times$ \\
FedProx~\cite{li2020federated}                 & $\times$   & \checkmark & $\times$ \\
SCAFFOLD~\cite{karimireddy2020scaffold}        & $\times$   & \checkmark & $\times$ \\
DP-FedAvg~\cite{abadi2016deep}                 & $\times$   & $\times$   & \checkmark \\
Geyer et al.~\cite{geyer2017differentially} & \checkmark & $\times$   & \checkmark \\
SimAgg~\cite{khan_regsimagg23}                 & \checkmark & \checkmark & $\times$ \\
\textbf{DP-SimAgg (ours)}                      & \checkmark & \checkmark & \checkmark \\
\bottomrule
\end{tabular}
}
\end{table}

\section{Methods}
\label{sec:method}
\subsection{Dataset Overview}
All experiments are conducted using the training split of the FeTS~2022 federated segmentation challenge~\cite{zenk2025towards}, specifically Partitioning~2, which distributes 1251 patients with confirmed glioblastoma (GBM) diagnoses across 33 institutions. This partitioning was designed to reflect realistic inter-institutional data heterogeneity, with institutions contributing between 4 and 171 patients each (see Fig.~\ref{fig:patientdistr}), making it a suitable benchmark for evaluating federated learning under non-IID conditions.

\begin{figure*}[th]
    \centering
    \includegraphics[width=\textwidth, angle=0]{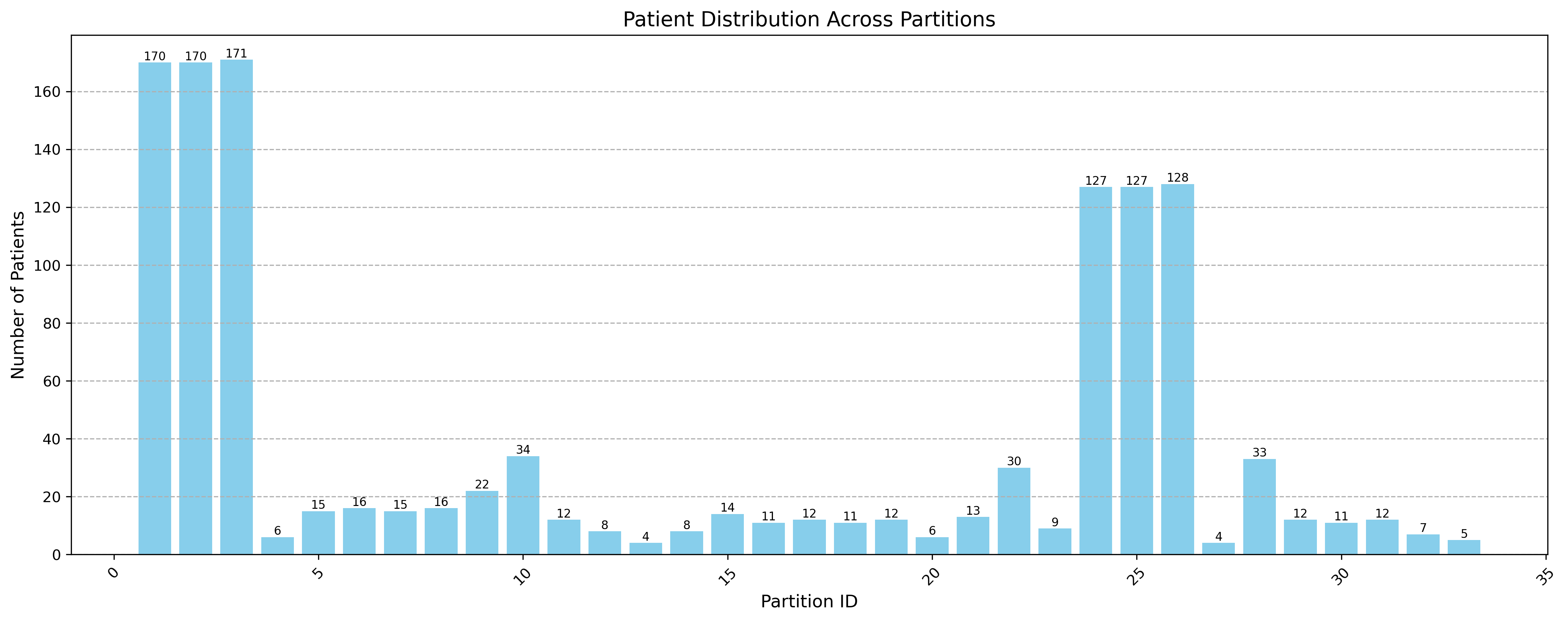}
    \caption{Official partitioning of the training set and patient distribution per institution.}
    \label{fig:patientdistr}
\end{figure*}
 
Glioblastoma multiforme is classified as a WHO Grade~IV primary brain tumour arising from astrocytes~\cite{kleihues1993new}. It is characterised by rapid infiltrative growth, typically localised to the frontal or temporal lobes, and carries a median survival of 14--16 months even with standard treatment~\cite{hanif2017glioblastoma}. Precise sub-region delineation from pre-operative MRI is therefore clinically critical for surgical planning, radiotherapy targeting, and treatment monitoring.
 
Each patient's imaging data consists of four co-registered, skull-stripped structural MRI volumes of size $240 \times 240 \times 155$ voxels at 1\,mm isotropic resolution, stored in NIfTI format (\texttt{.nii.gz}): native T1-weighted (T1), contrast-enhanced T1-weighted (T1Gd), T2-weighted (T2), and T2 Fluid-Attenuated Inversion Recovery (FLAIR). Representative examples of each modality and their corresponding tumor sub-regions are shown in Fig.~\ref{fig:seg}. Expert annotations provide three tumor sub-region labels following the BraTS convention~\cite{bakas_advancing_2017}: the necrotic tumor core (NCR, label~1), the peritumoral oedematous tissue (ED, label~2), and the GD-enhancing tumor (ET, label~4). Three clinically relevant composite regions are defined from these labels for evaluation: the enhancing tumor (ET, label~4), the tumor core (TC~$=$~NCR~$\cup$~ET, labels~1\,+\,4), and the whole tumor (WT~$=$~NCR~$\cup$~ED~$\cup$~ET, labels~1\,+\,2\,+\,4).
 
All imaging volumes were pre-processed centrally prior to federated distribution. The pipeline includes rigid co-registration of all modalities to the T1Gd volume, affine registration to the SRI24 anatomical atlas, resampling to 1\,mm$^3$ isotropic resolution, and automated skull stripping, following the procedure detailed in~\cite{bakas_advancing_2017,bakas_segmentation_2017,bakas_segmentation_2017-1}.
 
\begin{figure}[h]
    \centering
    \includegraphics[width=\columnwidth, angle=0]{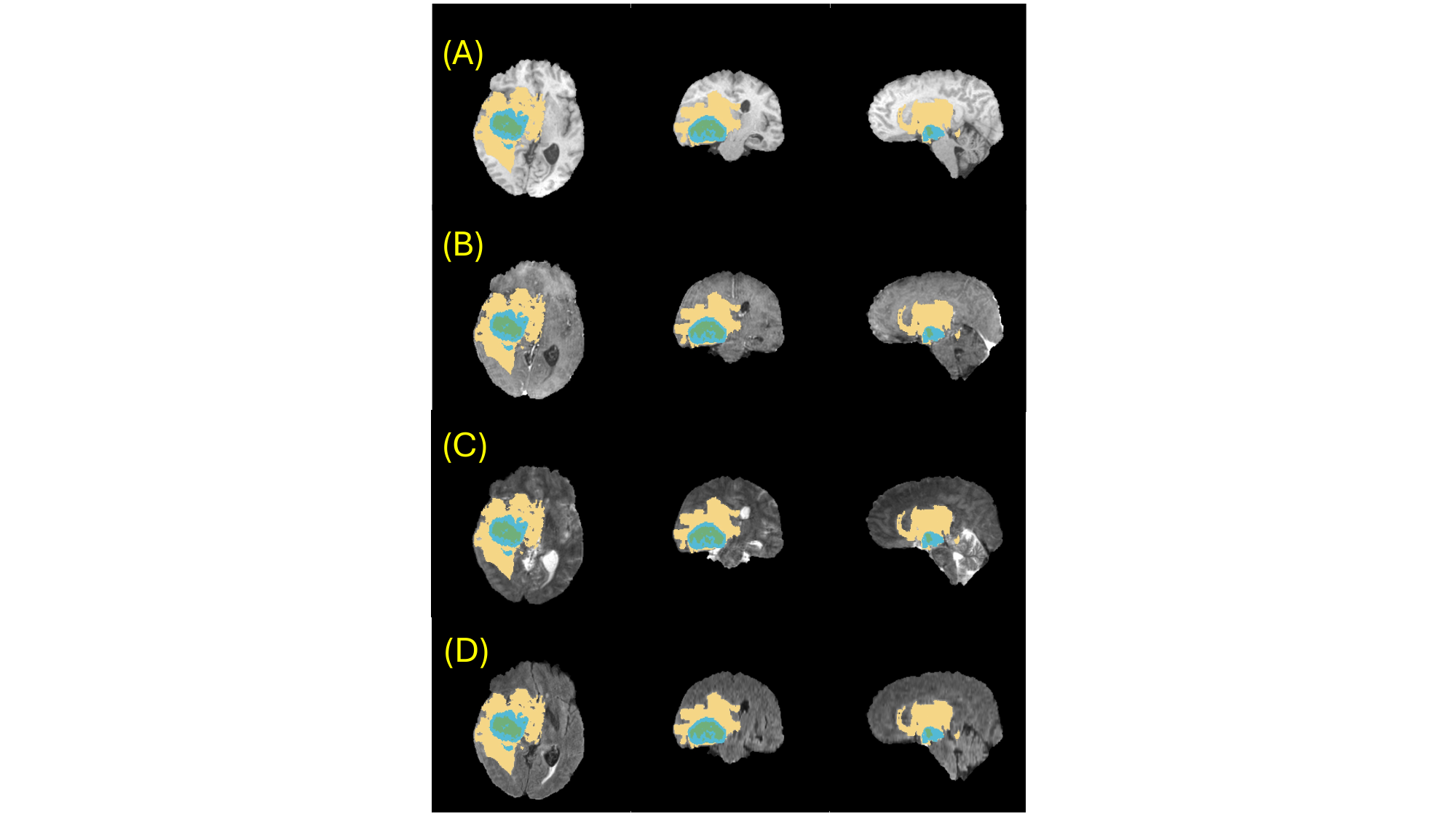}
    \caption{Representative glioblastoma (GBM) lesions across four mpMRI modalities: native T1 (A), contrast-enhanced T1Gd (B), T2 (C), and FLAIR (D). Tumor sub-regions are colour-coded: whole tumor including oedema (yellow), GD-enhancing tumor (blue), and necrotic core (green).}
    \label{fig:seg}
\end{figure}
 
\subsection{Federated Learning Framework}
 
Federated training is conducted using Intel's OpenFL framework~\cite{reina_openfl_2022}, an open-source FL platform designed for secure, distributed model training across institutional boundaries without raw data exchange. OpenFL organises training around two roles: \emph{collaborators}, each holding a private local dataset and executing model training on their own hardware; and an \emph{aggregator}, which coordinates the training schedule, collects parameter updates from active collaborators, and produces an updated global model at the end of each communication round. No patient data or raw images are transmitted at any stage; instead, only model tensors are exchanged.
 
The segmentation model is a residual U-Net~\cite{ronneberger_u-net_2015} with 95 layers and approximately 33 million parameters, provided by the FeTS~2022 challenge organisers~\cite{spyridon_bakas_2022_6362409} as the standard benchmark architecture for this task. The network follows an encoder--decoder design with skip connections that preserve spatial resolution across scales, which is particularly important for delineating small sub-regions such as the necrotic core.
 
Training is conducted over 20 communication rounds. At each round, a subset of collaborators trains the current global model locally on their private data for a fixed number of local epochs, then transmits their updated model tensors to the aggregator. The aggregator applies the DP-SimAgg procedure (Section~\ref{sec:method}) to produce the next global model, which is then broadcast to all collaborators for the subsequent round. All experiments are executed on a cluster node equipped with an NVIDIA Tesla V100 GPU and 350\,GB of system memory.

\subsection{Collaborator Selection Strategy}

Let $K$ denote the total number of available collaborators and $\rho \in (0,1]$ the participation ratio. In each communication round $t$, a subset $\mathcal{S}_t$ of $|\mathcal{S}_t| = \lfloor \rho K \rfloor$ collaborators is selected to participate in training. In our experiments, $K = 33$ and $\rho = 0.2$, yielding $|\mathcal{S}_t| = 6$ active collaborators per round. The selected set $\mathcal{S}_t$ is passed to Algorithm~1 as the \texttt{collaborators\_chosen} parameter, which determines the local tensors aggregated in that round.

Full participation ($\rho = 1$) is impractical in realistic federated deployments due to system heterogeneity, communication constraints, and variable device availability. However, naive 
independent random sampling at each round can result in uneven participation frequencies: some collaborators may be selected far more often than others over the course of training, introducing a systematic bias in the aggregated model toward over-represented data distributions.

To address this, we use a \emph{sliding window selection mechanism} over a randomly permuted collaborator list. At the beginning of each cycle, a uniformly random permutation $\pi$ of all $K$ collaborators is generated. Across consecutive rounds, a non-overlapping window of $|\mathcal{S}_t|$ collaborators is read sequentially from $\pi$. Once the window reaches the end of the permuted list (after $\lceil K / |\mathcal{S}_t| \rceil$ rounds), a new random permutation is generated and the process repeats. This guarantees that each collaborator appears \emph{approximately once} per cycle, with at most a few collaborators appearing in two consecutive windows at cycle boundaries due to integer rounding. Over the 20 training rounds used in our experiments, this yields 4 cycles, ensuring balanced participation across all institutions.

As illustrated in Fig.~\ref{fig:colab}, this approach differs from: (a)~\emph{static selection}, which fixes the same subset across all rounds and risks systematic overfitting to a data subset; and (b)~\emph{randomized order}, which visits all collaborators in a fixed random sequence without refreshing the permutation, limiting stochasticity across cycles. The proposed sliding window combines the coverage guarantee of ordered selection with the stochasticity of fresh permutations each cycle.

This mechanism provides two key properties:
\begin{itemize}
    \item \textbf{Fairness:} Each collaborator participates approximately once per cycle, preventing any institution from dominating or being systematically excluded from training.
    \item \textbf{Diversity:} Exposing the model to all data distributions in a structured, stochastic order improves generalization under non-IID conditions.
\end{itemize}

\begin{figure}[h]
    \centering
    \includegraphics[width=\columnwidth, angle=0]{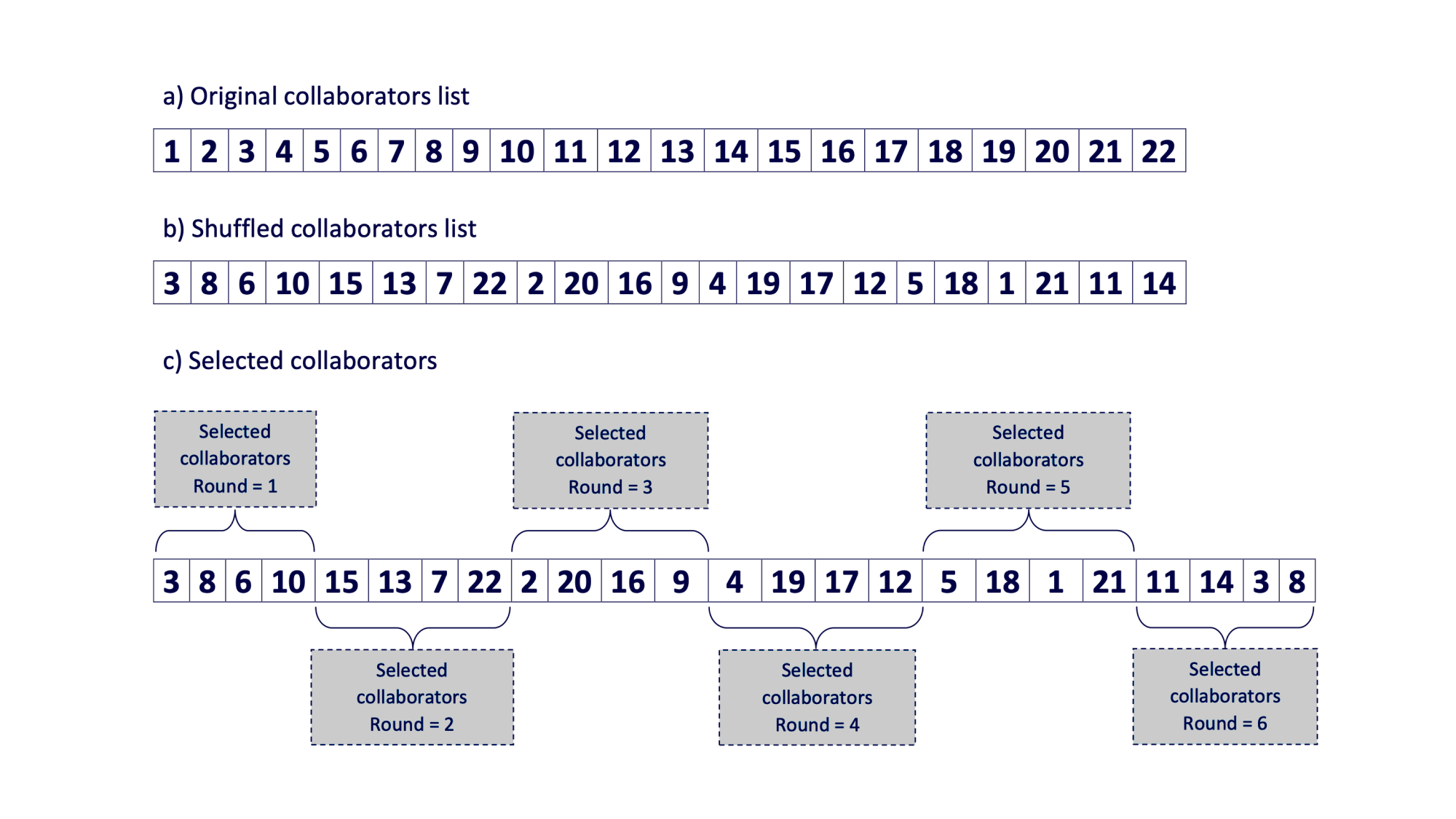}
    \caption{Illustration of the collaborator selection strategies: (a) Static Selection, (b) Randomized Order, and (c) Sliding Window approach, which ensures diversity and fairness in collaborator participation during federated learning.
    }
    \label{fig:colab}
\end{figure}

\subsection{Differential Privacy-Enabled Adaptive Weight Aggregation}

The proposed DP-SimAgg aggregation procedure combines similarity-aware weighting with a global differential privacy mechanism to address both statistical heterogeneity and privacy risks in federated learning environments. In heterogeneous medical datasets, collaborator updates may vary significantly due to differences in imaging protocols, patient populations, and institutional data distributions. To ensure robust aggregation while maintaining privacy guarantees, the proposed method incorporates four key stages: (i) bounding collaborator updates using L2 clipping, (ii) computing similarity-based aggregation weights, (iii) estimating aggregation sensitivity through a simulated neighboring dataset, and (iv) injecting calibrated Gaussian noise according to differential privacy parameters.

Prior to aggregation, each collaborator's model update $v_i$ is projected onto an L2 ball of radius $C$ via $\tilde{v}_i = v_i \cdot \min\!\left(1,\, C/\|v_i\|_2\right)$, ensuring $\|\tilde{v}_i\|_2 \leq C$ for all $i$. This bounds the influence of any single collaborator on the global model. From a differential privacy perspective, clipping is essential because it establishes a finite upper bound on the L2 sensitivity of the aggregation function. Without this norm constraint, the difference between aggregation outputs on neighboring datasets could grow arbitrarily large, requiring excessive noise injection that would degrade model utility.

After clipping, a similarity-weighted aggregation mechanism is applied. Unlike standard federated averaging, which typically assigns average weights of participating collaborators, 
the proposed method considers the similarity between each collaborator update and the global average update. The distance between local updates and the averaged global estimate is computed, and collaborators whose updates are closer to the consensus receive larger weights. This strategy improves robustness in non-IID federated learning environments by reducing the influence of highly divergent updates that may arise from skewed data distributions across institutions.

To estimate the sensitivity $\Delta f$, we simulate a neighboring dataset by replacing one collaborator update with a bounded adversarial tensor constrained by the clipping threshold $C$. This procedure provides an empirical approximation of the aggregation perturbation under worst-case bounded deviations. Although this approach does not yield a formal tight upper bound on the sensitivity, it offers a practical and tractable estimate for calibrating the Gaussian noise in our setting.

Finally, Gaussian noise is injected into the aggregated tensor using the estimated sensitivity $\Delta f$ and the per-round privacy budget parameters $(\epsilon_0, \delta_0)$, ensuring that the aggregation satisfies $(\epsilon_0,\delta_0)$-differential privacy for that round.

\subsection{Global Differential Privacy (GDP) in Federated Lesion Segmentation}

Global Differential Privacy (GDP) in federated learning protects the privacy of individual participants by introducing calibrated noise into the aggregated model updates at the central server. This centralized approach provides a practical balance between data utility and privacy preservation while avoiding the excessive noise accumulation often associated with local differential privacy mechanisms. By applying privacy protection only at the aggregation stage, GDP allows collaborators to train models locally without modifying their training procedures, while still safeguarding sensitive information.

The privacy guarantee follows the standard definition of $(\epsilon,\delta)$-differential privacy~\cite{10.1007/11787006_1,Dwork2013}. In the federated setting, two dataset profiles $D_1$ and $D_2$ are considered \emph{neighboring} if they differ in the contribution of exactly one collaborator (i.e., one collaborator's local model update is replaced or removed). A randomized mechanism $\mathcal{M}$ satisfies $(\epsilon,\delta)$-DP if, for all such neighboring profiles and all measurable output sets $S \subseteq \mathrm{Range}(\mathcal{M})$,
\begin{equation}
\Pr[\mathcal{M}(D_1) \in S] \leq e^{\epsilon} \cdot \Pr[\mathcal{M}(D_2) \in S] + \delta.
\label{cdp}
\end{equation}
Here, $\epsilon > 0$ is the privacy budget (smaller values yield stronger privacy), and $\delta \in [0,1)$ is a small additive slack term representing the probability of the privacy guarantee failing to hold.

For a function $f$ with L2 sensitivity $\Delta f = \max_{D_1 \sim D_2} \|f(D_1) - f(D_2)\|_2$, adding noise $\xi \sim \mathcal{N}(0, \sigma^2 \mathbf{I})$ with $\sigma = \frac{\Delta f}{\epsilon_0}\sqrt{2\ln(1.25/\delta_0)}$ yields an $(\epsilon_0,\delta_0)$-differentially private mechanism~\cite{abadi2016deep,Dwork2013}. DP-SimAgg instantiates this mechanism at each round using the empirically estimated sensitivity $\Delta f$ from the neighboring dataset simulation described above.

In the proposed framework, GDP is implemented directly at the aggregation server. After receiving the local model tensors from collaborators, the server first applies L2 clipping to ensure that all updates lie within a bounded norm. The clipped updates are then aggregated using the similarity-based weighting scheme described previously. To estimate the sensitivity $\Delta f$ of the aggregation function, a simulated neighboring dataset is constructed by replacing one collaborator update with a adversarial tensor 
bounded by the clipping threshold $C$. The L2 difference between the aggregation outputs for the original and simulated datasets provides an empirical estimate of $\Delta f$.

Once the sensitivity $\Delta f$ has been determined, Gaussian noise is sampled and added to the aggregated tensor according to the calibrated differential privacy mechanism. This centralized privacy protection minimizes cumulative noise during training while mitigating risks associated with model inversion, membership inference, and related inference-based attacks~\cite{DBLP:journals/corr/ShokriSS16,https://doi.org/10.48550/arxiv.2003.14053}.

\subsection{Privacy Budget and Noise-Accuracy Trade-off}

The level of privacy protection is controlled by the per-round privacy budget $\epsilon_0 > 0$, which governs the trade-off between privacy and model accuracy. Smaller values of $\epsilon_0$ correspond to stronger per-round privacy guarantees, but require injecting larger noise, which may reduce model performance. The parameter $\delta_0 \in (0,1)$ denotes the per-round probability of a privacy guarantee violation and is chosen to be negligibly small in practice.

The noise magnitude is calibrated to the L2 sensitivity $\Delta f$ of the aggregation function and the per-round parameters $(\epsilon_0, \delta_0)$. By the Gaussian mechanism~\cite{abadi2016deep,Dwork2013}, the noise standard deviation is set to
\[
\sigma = \frac{\Delta f}{\epsilon_0}
\sqrt{2 \ln\!\left(\frac{1.25}{\delta_0}\right)},
\]
which guarantees that injecting noise $\xi \sim \mathcal{N}(0, \sigma^2 \mathbf{I})$ into the aggregated tensor yields an $(\epsilon_0, \delta_0)$-differentially private mechanism for that round.

Algorithms~1--3 implement the complete privacy-preserving aggregation procedure. Algorithm~1 ($\textsc{DP-SimAgg}$) orchestrates the full aggregation pipeline, calling Algorithm~3 ($\textsc{L2Clip}$) to bound collaborator updates and Algorithm~2 ($\textsc{AddGaussianNoise}$) to inject calibrated Gaussian noise. Together, these components ensure that all collaborator tensors are constrained within a predefined L2 norm ball of radius $C$ prior to aggregation, thereby bounding sensitivity, limiting the influence of outlier updates, and stabilizing training.

In our experiments, federated training was conducted for 20 communication rounds. Each round applies one invocation of the Gaussian mechanism with per-round budget $(\epsilon_0, \delta_0)$. By the sequential composition theorem for differential privacy~\cite{Dwork2013}, the total privacy cost after $T$ rounds is at most $(T\epsilon_0,\, T\delta_0)$-differential privacy under basic composition. Tighter bounds can be obtained via advanced composition or the moments accountant~\cite{abadi2016deep}, but we report the basic sequential bound here for transparency. Specifically, when $\epsilon_0 = 1$ per round, the total budget after 20 rounds is $\epsilon_{\text{total}} = 20$; when $\epsilon_0 = 10$, the total budget is $\epsilon_{\text{total}} = 200$ for the monitored weight tensor.
The failure probability parameter $\delta_0$ was fixed at $10^{-5}$ per round (total $\delta_{\text{total}} \leq 20 \times 10^{-5} = 2\times10^{-4}$), while the L2 clipping bound was set to $C = 100$. These parameters provide a practical balance between privacy protection and segmentation performance for the considered medical imaging task.

\begin{algorithm}
\caption{$\textsc{DP-SimAgg}$: Similarity-Weighted Aggregation with Central $(\epsilon,\delta)$-Differential Privacy}
\begin{algorithmic}[1]
\Require $local\_tensors$, $db\_iterator$, $tensor\_name$, $fl\_round$, $collaborators\_chosen$, $collaborator\_times$, per-round $\epsilon_0 > 0$, per-round $\delta_0 \in (0,1)$, clipping bound $C > 0$
\Ensure Aggregated tensor $\tilde{pm}$ satisfying $(\epsilon_0,\delta_0)$-DP for this round

\If{$tensor\_name$ corresponds to weight or bias}
    \State $\epsilon_{safe} \gets 10^{-8}$
    \State Extract tensor values: $T \gets [t.\text{tensor} \text{ for } t \in local\_tensors]$
    \State Extract volume weights: $W \gets [t.\text{weight} \text{ for } t \in local\_tensors]$
    
    \Comment{Step 1: Apply L2 Clipping (Algorithm~3) with bound $C$}
    \For{each $i$}
        \State $T[i] \gets \textsc{L2Clip}(T[i],\; C)$
    \EndFor
    
    \Comment{Step 2: Compute Similarity-Weighted Aggregate}
    \State $\bar{T} \gets \frac{1}{|T|}\sum_{i} T[i]$ \Comment{unweighted mean for distance reference}
    
    \State Compute L2 distances: $d_i \gets \|\bar{T} - T[i]\|_2$ for each $i$
    
    \State Compute total distance: $total\_dist \gets \sum_{i} d_i$
    
    \For{each collaborator $i$}
        \State Similarity factor: $s_i \gets \frac{total\_dist}{\epsilon_{safe} + d_i}$
    \EndFor
    
    \State Combined weights: $\hat{w}_i \gets W[i] + s_i$ for each $i$
    
    \State Normalize: $W_{norm}[i] \gets \frac{\hat{w}_i}{\epsilon_{safe} + \sum_{j} \hat{w}_j}$
    
    \State Weighted aggregate: $pm \gets \sum_{i} W_{norm}[i] \cdot T[i]$
    
    \Comment{Step 3: Estimate L2 Sensitivity via Neighboring Dataset}
    \State Copy $T' \gets T$; substitute $T'[0] \gets \mathbf{1} \cdot C$ \Comment{worst-case neighbor}
    
    \State Compute neighboring aggregate: $pm' \gets \sum_{i} W_{norm}[i] \cdot T'[i]$
    
    \State Estimate sensitivity: $\Delta f \gets \|pm - pm'\|_2$
    
    \Comment{Step 4: Inject Gaussian Noise for Differential Privacy}
    \State $\tilde{pm} \gets \textsc{AddGaussianNoise}(pm,\; \Delta f,\; \epsilon_0,\; \delta_0)$
    
    \State \Return $\tilde{pm}$
\Else
    \State Compute data-volume-weighted average: $pm \gets \frac{\sum_{i} W[i] \cdot T[i]}{\sum_{i} W[i]}$
    \State Set $\Delta f \gets C$ \Comment{conservative sensitivity bound for non-weight tensors}
    \State Apply noise: $\tilde{pm} \gets \textsc{AddGaussianNoise}(pm,\; \Delta f,\; \epsilon_0,\; \delta_0)$
    \State \Return $\tilde{pm}$
\EndIf
\end{algorithmic}
\end{algorithm}

\begin{algorithm}
\caption{$\textsc{AddGaussianNoise}$: Gaussian Noise Injection for $(\epsilon,\delta)$-DP}
\begin{algorithmic}[1]
\Require Aggregated tensor $pm$, L2 sensitivity $\Delta f$, privacy budget $\epsilon > 0$, failure probability $\delta \in (0,1)$
\Ensure Noisy aggregated tensor $\tilde{pm}$ satisfying $(\epsilon,\delta)$-DP

\Comment{Step 1: Compute Noise Standard Deviation (Gaussian Mechanism)}
\State
\[
\sigma = \frac{\Delta f}{\epsilon} \cdot \sqrt{2 \ln(1.25 / \delta)}
\]

\Comment{Step 2: Sample Gaussian Noise}
\State
\[
\xi \sim \mathcal{N}\!\left(0,\; \sigma^2 \mathbf{I}\right)
\]

\Comment{Step 3: Add Noise to Aggregated Tensor}
\State
\[
\tilde{pm} = pm + \xi
\]

\State \Return $\tilde{pm}$

\end{algorithmic}
\end{algorithm}

\begin{algorithm}
\caption{$\textsc{L2Clip}$: L2 Norm Clipping for Differential Privacy}
\begin{algorithmic}[1]
\Require Input tensor $v$, clipping bound $C > 0$
\Ensure Clipped tensor $\tilde{v}$ with $\|\tilde{v}\|_2 \leq C$

\Comment{Step 1: Convert input to array}
\State $v \gets \text{asarray}(v)$

\Comment{Step 2: Handle Scalar Case}
\If{$v$ is a scalar}
    \State \Return $v$ \Comment{scalars need no clipping}
\EndIf

\Comment{Step 3: Flatten for Norm Computation}
\State $v_{flat} \gets \text{flatten}(v)$

\Comment{Step 4: Compute L2 Norm}
\State $\|v_{flat}\|_2 \gets \sqrt{\sum_j v_{flat}[j]^2}$

\Comment{Step 5: Clip if Norm Exceeds Bound}
\If{$\|v_{flat}\|_2 > C$}
    \State $v_{flat} \gets \dfrac{C}{\|v_{flat}\|_2} \cdot v_{flat}$
    \State $\tilde{v} \gets \text{reshape}(v_{flat},\; \text{shape}(v))$
    \State \Return $\tilde{v}$
\Else
    \State \Return $v$
\EndIf

\end{algorithmic}
\end{algorithm}


\section{Results}
\label{sec:results}

We assess the performance of a 3D U-Net model trained on the FeTS~2022 dataset in a federated setting using our proposed DP-SimAgg methodology. For comparison, we trained for 20 federation rounds using the non-private SimAgg baseline and DP-SimAgg with per-round budgets $\epsilon_0 \in \{1, 10\}$.
The total simulation training time per FL run is approximately 16--18 hours for both DP-SimAgg and SimAgg. Notably, incorporating differential privacy through DP-SimAgg does not significantly increase the computational overhead, confirming that the noise injection step is computationally inexpensive relative to the local model training cost.

Figure~\ref{fig_dp_simagg} illustrates the performance of DP-SimAgg with per-round privacy budgets $\epsilon_0 \in \{1.0, 10\}$ compared to the non-private SimAgg baseline. All methods achieve stable performance within approximately 10 federated rounds. 
Table~\ref{internal_validation_table} details the computational and performance metrics after 20 rounds of training. The segmentation labels follow the BraTS convention: label~1 denotes the necrotic tumor core (NCR), label~2 the peritumoral edema (ED), and label~4 the GD-enhancing tumor (ET); label~0 is background (non-tumor). For reporting, the composite regions are: enhancing tumor (ET, label~4), tumor core (TC = NCR $\cup$ ET, labels~1$+$4), and whole tumor (WT = NCR $\cup$ ED $\cup$ ET, labels~1$+$2$+$4).

DP-SimAgg with $\epsilon_0=1.0$ demonstrates competitive segmentation accuracy, achieving 
Dice scores of 0.9967 for background (label 0), 0.6357 for enhancing tumor (ET, label 4), 0.5305 for tumor core (TC), and 0.5274 for whole tumor (WT).
Increasing the per-round privacy budget to $\epsilon_0=10$ substantially improves the Dice scores for all lesion regions, closely matching the baseline SimAgg performance. Computationally, DP-SimAgg demonstrates efficiency comparable to SimAgg, with minor variations in memory usage (${\approx}305$~GB), training duration (16--18 hours), and energy consumption (${\approx}3400$~Wh), validating its practicality for privacy-preserving federated learning applications.

The experimental evaluation highlights the practical applicability of our DP-SimAgg methodology for privacy-preserving federated learning in medical image segmentation tasks. By incorporating differential privacy and leveraging the SimAgg algorithm, we achieve a balance between privacy preservation and model accuracy, enabling the development of robust and generalizable segmentation models while ensuring the protection of sensitive patient data.
\begin{figure*}
    \centering
    \includegraphics[width=\textwidth, angle=0]{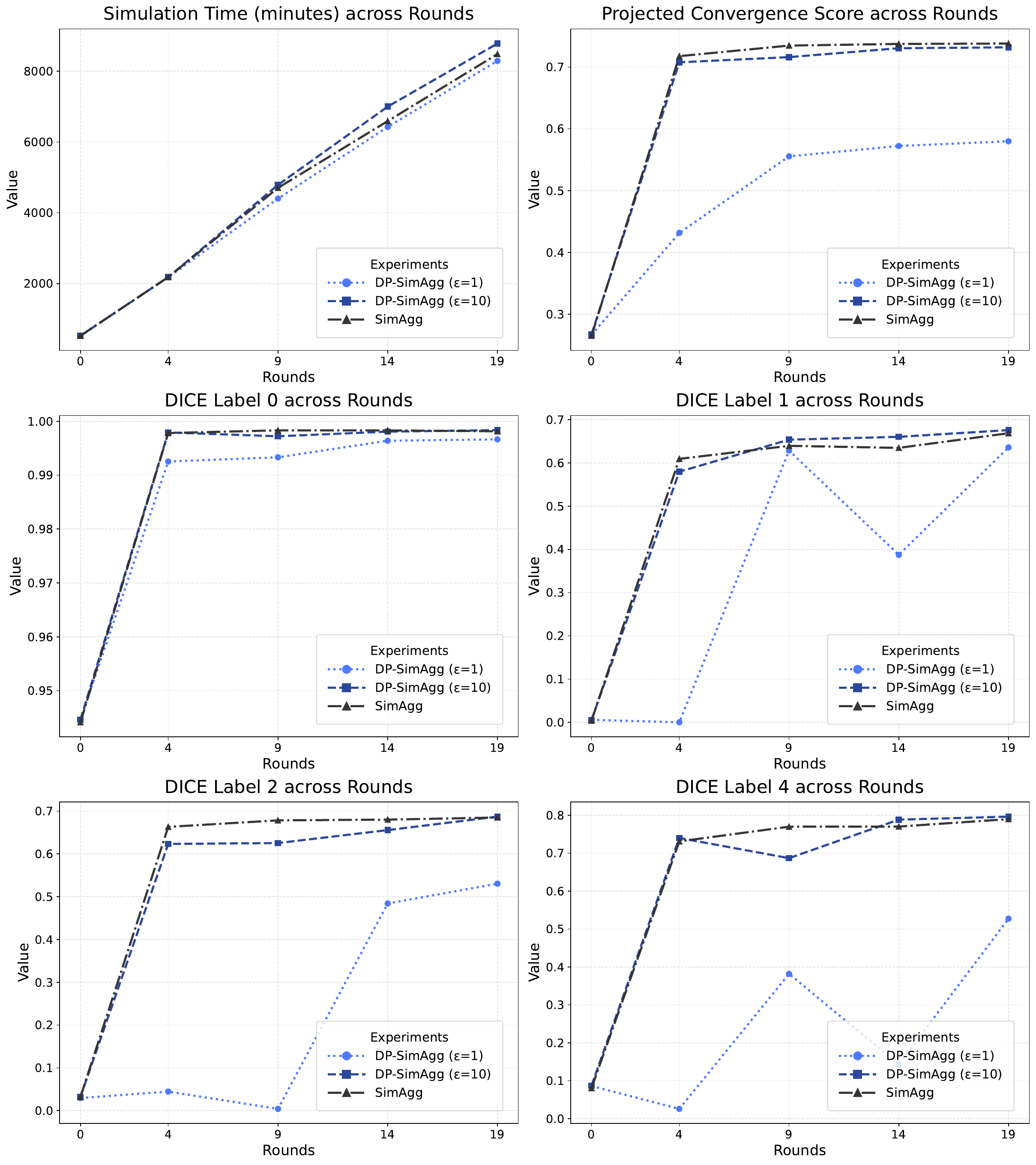}
    \caption{DP-SimAgg with per-round budgets $\epsilon_0 \in \{1.0, 10\}$ and non-private SimAgg across multiple segmentation metrics over 20 federated training rounds.}
    \label{fig_dp_simagg}
\end{figure*}

\begin{table*}[t]
\caption{Performance comparison of DP-SimAgg and SimAgg on the internal validation dataset after 20 training rounds. $\epsilon$ values are \emph{per-round} budgets; cumulative budgets under basic sequential composition are $\epsilon_{\text{total}}=20$ and $\epsilon_{\text{total}}=200$, respectively. DICE Label~0: background; Label~1: necrotic core (NCR); Label~2: peritumoral edema (ED) / tumor core (TC); Label~4: GD-enhancing tumor (ET) / whole tumor (WT) composite.}
\label{internal_validation_table}
\vskip 0.15in
\begin{center}
\begin{small}
\begin{sc}
\begin{tabular}{lccc}
\toprule
Metrics & DP-SimAgg ($\epsilon_0{=}1$, $\epsilon_{\text{tot}}{=}20$) & DP-SimAgg ($\epsilon_0{=}10$, $\epsilon_{\text{tot}}{=}200$) & SimAgg (no DP) \\
\midrule
Job Wall-clock time       & 16:08:53 & 17:55:35 & 18:01:56 \\
Memory Utilized (GB)      & 305.43   & 305.88   & 305.32 \\
Energy (Wh)               & 3393.88  & 3780.68  & 3396.47 \\
Simulation Time (min)     & 8291.95  & 8781.74  & 8484.11 \\
Projected Conv. Score     & \textbf{0.5799} & \textbf{0.7318} & \textbf{0.7380} \\
DICE Label 0              & 0.9967   & 0.9984   & 0.9981 \\
DICE Label 1              & 0.6357   & 0.6757   & 0.6685 \\
DICE Label 2              & 0.5305   & 0.6872   & 0.6851 \\
DICE Label 4              & 0.5274   & 0.7962   & 0.7896 \\
\bottomrule
\end{tabular}
\end{sc}
\end{small}
\end{center}
\vskip -0.1in
\end{table*}

\section{Discussion}
\label{sec:discussion}

\subsection*{Privacy--Utility Trade-off}
 
The results from Figure~\ref{fig_dp_simagg} and Table~\ref{internal_validation_table} confirm that DP-SimAgg maintains competitive segmentation performance while 
providing per-round $(\epsilon_0,\delta_0)$-DP guarantees under the chosen sensitivity bound
All methods converge within approximately 10 federated rounds, demonstrating the practical efficiency of the framework. Under a strict per-round budget ($\epsilon_0=1.0$), noise injection reduces Dice scores by roughly 10--15 percentage points relative to non-private SimAgg, particularly for the tumor core and whole tumor regions. Relaxing the budget to $\epsilon_0=10$ closes most of this gap, with Dice scores closely matching SimAgg across all lesion classes. Computational overhead is negligible: wall-clock time, memory, and energy consumption differ by less than 5\% between DP-SimAgg and SimAgg regardless of privacy setting.
 
The Dice reductions observed under $\epsilon_0=1$ warrant careful clinical interpretation. Performance differences of this magnitude 
fall within the range of inter-rater variability reported in multi-institutional glioblastoma segmentation studies~\cite{bakas_advancing_2017}, suggesting that the framework may remain practically useful for research-oriented or decision-support settings
even at strict privacy budgets, particularly in screening or pre-operative planning contexts. That said, deployment in fully automated clinical pipelines would require prospective validation on institution-specific data.
 
\subsection*{Central vs.\ Local Differential Privacy}
 
DP-SimAgg employs central (server-side) DP, which differs fundamentally from local DP in both its trust model and noise characteristics. In local DP, each collaborator independently perturbs its update before transmission, requiring noise calibrated to the per-collaborator sensitivity. This results in substantially higher total noise at the server after aggregation. Central DP, by contrast, applies a single calibrated noise injection at the aggregator after receiving all local tensors, with noise scaled to the \emph{global} L2 sensitivity $\Delta f$ of the aggregation function. For a fixed $(\epsilon_0,\delta_0)$ target, this yields a lower noise magnitude and better model utility~\cite{kaissis2020}. The near-baseline performance at $\epsilon_0=10$ directly reflects this efficiency. The trade-off is the requirement for a trusted aggregator is a standard assumption in central DP but one that may not hold in all federated deployments.
 
Gaussian noise injection also reduces risk or offers principled protection against information leakage,
 including model inversion and membership inference~\cite{DBLP:journals/corr/ShokriSS16,https://doi.org/10.48550/arxiv.2003.14053}. By obscuring individual collaborator contributions in the aggregated tensor, DP-SimAgg reduces the risk of patient re-identification, supporting compliance with privacy regulations such as GDPR and HIPAA.
 
\subsection*{Limitations and Future Directions}
 
Two methodological limitations warrant particular attention. First, DP-SimAgg requires a trusted central aggregator. This is standard in the central DP model but may be unacceptable in fully decentralized or adversarial-aggregator settings. Future work could combine DP-SimAgg with secure aggregation protocols~\cite{mugunthan_peraire_bueno_kagal_2020} or investigate hybrid schemes that apply local DP at the collaborator level with a lighter central post-processing step. Second, the sensitivity $\Delta f$ is estimated empirically via a simulated neighboring dataset rather than derived analytically from the specific SimAgg weight structure. As shown in the Supplementary Material (Section~S1, Proposition~S1), a conservative analytical bound of $4C$ holds 
for aggregator, but is empirically loose. 
 
Beyond these core issues, additional practical limitations include: (i) evaluation is restricted to a single brain tumour segmentation task — broader validation across imaging modalities and pathologies is needed; (ii) the fixed 20\% collaborator sampling rate may be suboptimal, so adaptive selection based on update divergence or data quality could accelerate convergence; and (iii) scalability to larger collaborator pools or higher-dimensional models has not been assessed. Tighter privacy accounting via the moments accountant~\cite{abadi2016deep} or R\'{e}nyi DP composition could also reduce the reported cumulative budget without changing the per-round mechanism.

While prior work has extensively demonstrated the superiority of SimAgg over standard federated optimization methods such as FedAvg in non-IID medical imaging settings, the primary focus of this study is to investigate its behavior under differential privacy constraints. We acknowledge that direct comparisons with differentially private baselines (e.g., DP-FedAvg) and a detailed ablation study isolating the contributions of similarity-weighted aggregation and privacy mechanisms would further strengthen the empirical analysis. Additionally, although standard techniques such as L2 norm clipping and Gaussian noise injection are employed, a more rigorous treatment of privacy accounting—such as tighter bounds using R\'{e}nyi DP remains an important direction for future work. The observed degradation in segmentation performance with stricter privacy budgets is consistent with the expected privacy–utility trade-off, with varying sensitivity across tumor subregions suggesting the influence of structural complexity and class imbalance. Despite these limitations, this work provides, to our knowledge, the first empirical study of similarity-weighted aggregation within a differentially private federated learning framework for brain tumor segmentation, demonstrating that the proposed approach maintains competitive performance even under strong privacy constraints and highlighting its potential for privacy-preserving medical AI applications.

\section{Conclusion}
We proposed DP-SimAgg, a federated learning framework that integrates similarity-weighted aggregation with server-side Gaussian differential privacy for brain lesion segmentation. The method bounds collaborator influence via L2 clipping, adapts aggregation weights to non-IID data distributions through a similarity-based scheme, and injects calibrated Gaussian noise at the central aggregator to provide formal $(\epsilon_0,\delta_0)$-DP guarantees per communication round. Evaluated on the FeTS~2022 dataset with 1251 multi-institutional MRI scans, DP-SimAgg achieves competitive Dice scores even under a strict per-round budget ($\epsilon_0=1$, $\epsilon_{\text{total}}=20$), and approaches non-private SimAgg performance at $\epsilon_0=10$, with minor additional overhead computational cost.
 
Key open directions include: deriving a tighter analytical sensitivity bound to replace the current empirical estimate; extending the framework to untrusted-aggregator settings via secure aggregation; applying tighter composition accountants to reduce the reported cumulative privacy cost; and validating across broader clinical imaging tasks and collaborator scales.

\section*{Author Biographies}

\textbf{Muhammad Irfan Khan, M.Sc.}, is a Senior Researcher in Turku University of Applied Sciences, Finland. His research focuses on bayesian machine learning and federated learning. 

\textbf{Eero Lehtonen, D.Sc. (Tech.)}, (Electronics), is a Principal Lecturer and the leader of the INSIGHT research group at Turku University of Applied Sciences, Finland. His research focuses on computer vision, applied machine learning, and AI‑based methods for medical imaging.

\textbf{Joni Obradovic, B.Eng}, Health Tech (ICT), has experience in research and development of AI solutions using privacy-preserving federated learning and agentic workflows. He has expertise in leveraging supercomputing resources for AI solutions and works as a developer in the proposed project, with an emphasis on federated and remotely orchestrated components of the software.

\textbf{Elina Kontio, PhD} is currently the Head of Education and Research in Data Engineering and AI Technologies at Turku University of Applied Sciences, Finland. She has over 20 years of experience in higher education, research leadership, and curriculum development in health technology and ICT. Her research interests include privacy-preserving artificial intelligence, federated learning, medical image analysis, and data-driven decision-making in healthcare. 

\textbf{Esa Alhoniemi, PhD} is an Adjunct Professor and Senior Data Scientist with over three decades of experience in academia and industry. He worked at Helsinki University of Technology and the University of Turku, where his research focused on data mining and algorithmics. Since 2008, he has held senior industry roles in bioinformatics, predictive analytics, data automation, and data science. He is currently a Senior Advisor at Turku University of Applied Sciences, working on data science applications in business.

\textbf{Suleiman A. Khan, PhD} has expertise in artificial intelligence, machine learning, and data-driven systems. He is currently affiliated with Turku University of Applied Sciences, Finland, where he works in ICT, data engineering, and AI technologies. His research interests include applied machine learning, federated learning, medical image analysis, privacy-preserving AI, and intelligent software systems. He has contributed to academic and applied research projects involving AI-based healthcare and data science applications.

\textbf{Mojtaba Jafaritadi, PhD} received the doctoral degree in Medical Physics and Engineering from the University of Turku, Finland. He is an Adjunct Professor of Health AI at the University of Turku and previously worked as a Postdoctoral Research Fellow at Stanford University and Principal Lecturer at Turku University of Applied Sciences. His research focuses on generative AI, privacy-preserving machine learning, biomedical signal and image processing, medical imaging, and multimodal learning systems.

\section*{Acknowledgements}
This work was partially supported by the Business Finland under grants 10882/31/2022 and 1337/31/2024 and by the EU Horizon project Phase IV AI (grant number 101095384) . We also thank CSC-Puhti super-computer for the computational resources. All sections were improved by using AI including chatGPT and Claude.

\bibliographystyle{IEEEtran}
\bibliography{example_paper}

\begin{thebibliography}{10}
\providecommand{\url}[1]{#1}
\csname url@samestyle\endcsname
\providecommand{\newblock}{\relax}
\providecommand{\bibinfo}[2]{#2}
\providecommand{\BIBentrySTDinterwordspacing}{\spaceskip=0pt\relax}
\providecommand{\BIBentryALTinterwordstretchfactor}{4}
\providecommand{\BIBentryALTinterwordspacing}{\spaceskip=\fontdimen2\font plus
\BIBentryALTinterwordstretchfactor\fontdimen3\font minus
  \fontdimen4\font\relax}
\providecommand{\BIBforeignlanguage}[2]{{%
\expandafter\ifx\csname l@#1\endcsname\relax
\typeout{** WARNING: IEEEtran.bst: No hyphenation pattern has been}%
\typeout{** loaded for the language `#1'. Using the pattern for}%
\typeout{** the default language instead.}%
\else
\language=\csname l@#1\endcsname
\fi
#2}}
\providecommand{\BIBdecl}{\relax}
\BIBdecl

\bibitem{varoquaux2022machine}
G.~Varoquaux and V.~Cheplygina, ``Machine learning for medical imaging:
  Methodological failures and recommendations for the future,'' \emph{NPJ
  Digital Medicine}, vol.~5, no.~1, p.~48, 2022.

\bibitem{sheller2020federated}
M.~J. Sheller, B.~Edwards, G.~A. Reina, J.~Martin, S.~Pati, A.~Kotrotsou,
  M.~Milchenko, W.~Xu, D.~Marcus, R.~R. Colen \emph{et~al.}, ``Federated
  learning in medicine: Facilitating multi-institutional collaborations without
  sharing patient data,'' \emph{Scientific Reports}, vol.~10, no.~1, p. 12598,
  2020.

\bibitem{bakas_advancing_2017}
S.~Bakas, H.~Akbari, A.~Sotiras, M.~Bilello, M.~Rozycki, J.~S. Kirby, J.~B.
  Freymann, K.~Farahani, and C.~Davatzikos, ``Advancing {The Cancer Genome
  Atlas} glioma {MRI} collections with expert segmentation labels and radiomic
  features,'' \emph{Scientific Data}, vol.~4, no.~1, p. 170117, 2017.

\bibitem{may2010hipaa2}
M.~May, ``Hipaa legislation means more delicate handling of data,''
  \emph{Nature Medicine}, vol.~16, no.~3, pp. 250--251, 2010.

\bibitem{tinja}
T.~Pitk{\"a}m{\"a}ki, T.~Pahikkala, I.~M. Perez, P.~Movahedi, V.~Nieminen,
  T.~Southerington, J.~Vaiste, M.~Jafaritadi, M.~I. Khan, E.~Kontio,
  P.~Ranttila, J.~Pajula, H.~P{\"o}l{\"o}nen, A.~Degerli, J.~Plomp, and
  A.~Airola, ``Finnish perspective on using synthetic health data to protect
  privacy: The privasa project,'' \emph{Applied Computing and Intelligence},
  vol.~4, no.~2, pp. 138--163, 2024.

\bibitem{ng2021federated}
D.~Ng, X.~Lan, M.~M.-S. Yao, W.~P. Chan, and M.~Feng, ``Federated learning: A
  collaborative effort to achieve better medical imaging models for individual
  sites that have small labelled datasets,'' \emph{Quantitative Imaging in
  Medicine and Surgery}, vol.~11, no.~2, p. 852, 2021.

\bibitem{mcmahan2017communication}
H.~B. McMahan, E.~Moore, D.~Ramage, S.~Hampson, and B.~A. y~Arcas,
  ``Communication-efficient learning of deep networks from decentralized
  data,'' in \emph{Proceedings of the 20th International Conference on
  Artificial Intelligence and Statistics}, 2017, pp. 1273--1282.

\bibitem{wang_yurochkin_papailiopoulos_khazaeni_2020}
H.~Wang, M.~Yurochkin, Y.~Sun, D.~Papailiopoulos, and Y.~Khazaeni, ``Federated
  learning with matched averaging,'' \emph{arXiv preprint arXiv:2002.06440},
  2020.

\bibitem{khan_regsimagg23}
M.~I. Khan, M.~A. Azeem, E.~Alhoniemi, E.~Kontio, S.~A. Khan, and
  M.~Jafaritadi, ``Regularized weight aggregation in networked federated
  learning for glioblastoma segmentation,'' \emph{arXiv preprint
  arXiv:2301.12617}, 2023.

\bibitem{zhao2018federated}
Y.~Zhao, M.~Li, L.~Lai, N.~Suda, D.~Civin, and V.~Chandra, ``Federated learning
  with non-iid data,'' \emph{arXiv preprint arXiv:1806.00582}, 2018.

\bibitem{Kairouz2019}
P.~Kairouz, H.~B. McMahan, B.~Avent, A.~Bellet, M.~Bennis, A.~N. Bhagoji,
  K.~Bonawitz, Z.~Charles, G.~Cormode, R.~Cummings \emph{et~al.}, ``Advances
  and open problems in federated learning,'' \emph{Foundations and Trends in
  Machine Learning}, vol.~14, no. 1--2, pp. 1--210, 2021.

\bibitem{karimireddy2020scaffold}
S.~P. Karimireddy, S.~Kale, M.~Mohri, S.~Reddi, S.~Stich, and A.~T. Suresh,
  ``{SCAFFOLD}: Stochastic controlled averaging for federated learning,'' in
  \emph{ICML}, 2020, pp. 5132--5143.

\bibitem{https://doi.org/10.48550/arxiv.2003.14053}
J.~Geiping, H.~Bauermeister, H.~Dr{\"o}ge, and M.~Moeller, ``Inverting
  gradients -- how easy is it to break privacy in federated learning?''
  \emph{arXiv preprint arXiv:2003.14053}, 2020.

\bibitem{DBLP:journals/corr/ShokriSS16}
R.~Shokri, M.~Stronati, and V.~Shmatikov, ``Membership inference attacks
  against machine learning models,'' \emph{CoRR}, vol. abs/1610.05820, 2016.

\bibitem{truong_privacy_2021}
N.~Truong, K.~Sun, S.~Wang, F.~Guitton, and Y.~Guo, ``Privacy preservation in
  federated learning: An insightful survey from the {GDPR} perspective,''
  \emph{arXiv preprint arXiv:2011.05411}, 2021.

\bibitem{rodriguez2023survey}
N.~Rodr{\'i}guez-Barroso, D.~Jim{\'e}nez-L{\'o}pez, M.~V. Luz{\'o}n,
  F.~Herrera, and E.~Mart{\'i}nez-C{\'a}mara, ``Survey on federated learning
  threats: Concepts, taxonomy on attacks and defences, experimental study and
  challenges,'' \emph{Information Fusion}, vol.~90, pp. 148--173, 2023.

\bibitem{khan2021adaptive}
M.~I. Khan, M.~Jafaritadi, E.~Alhoniemi, E.~Kontio, and S.~A. Khan, ``Adaptive
  weight aggregation in federated learning for brain tumor segmentation,'' in
  \emph{International MICCAI Brainlesion Workshop}.\hskip 1em plus 0.5em minus
  0.4em\relax Springer, 2021, pp. 455--469.

\bibitem{zenk2025towards}
M.~Zenk, U.~Baid, S.~Pati, A.~Linardos, B.~Edwards, M.~Sheller, P.~Foley,
  A.~Aristizabal, D.~Zimmerer, A.~Gruzdev \emph{et~al.}, ``Towards fair
  decentralized benchmarking of healthcare {AI} algorithms with the federated
  tumor segmentation ({FeTS}) challenge,'' \emph{Nature Communications},
  vol.~16, no.~1, p. 6274, 2025.

\bibitem{Linardos_2025}
A.~Linardos, S.~Pati, U.~Baid, B.~Edwards, P.~Foley, K.~Ta, V.~Chung,
  M.~Sheller, M.~I. Khan, M.~Jafaritadi, E.~Kontio, S.~Khan, L.~Machler,
  I.~Ezhov, S.~Shit, J.~C. Paetzold, G.~Grimberg, M.~A. Nickel, D.~Naccache,
  V.~Siomos, J.~Passerat-Palmbach, G.~Tarroni, D.~Kim, L.~L. Klausmann,
  P.~Shah, B.~Menze, D.~Makris, and S.~Bakas, ``The {MICCAI} federated tumor
  segmentation ({FeTS}) challenge 2024: Efficient and robust aggregation
  methods,'' \emph{Machine Learning for Biomedical Imaging}, vol.~3, pp.
  757--774, 2025.

\bibitem{reina_openfl_2022}
G.~A. Reina, A.~Gruzdev, P.~Foley, O.~Perepelkina, M.~Sharma, I.~Davidyuk,
  I.~Trushkin, M.~Radionov, A.~Mokrov, D.~Agapov, J.~Martin, B.~Edwards, M.~J.
  Sheller, S.~Pati, P.~N. Moorthy, S.-h. Wang, P.~Shah, and S.~Bakas, ``Openfl:
  An open-source framework for federated learning,'' \emph{Physics in Medicine
  and Biology}, 2022.

\bibitem{li2020federated}
T.~Li, A.~K. Sahu, M.~Zaheer, M.~Sanjabi, A.~Talwalkar, and V.~Smith,
  ``Federated optimization in heterogeneous networks,'' in \emph{MLSys}, 2020.

\bibitem{wang2020tackling}
J.~Wang, Q.~Liu, H.~Liang, G.~Joshi, and H.~V. Poor, ``Tackling the objective
  inconsistency problem in heterogeneous federated optimization,'' in
  \emph{Advances in Neural Information Processing Systems}, vol.~33, 2020, pp.
  7611--7623.

\bibitem{blanchard2017machine}
P.~Blanchard, E.~M. El~Mhamdi, R.~Guerraoui, and J.~Stainer, ``Machine learning
  with adversaries: Byzantine tolerant gradient descent,'' in \emph{Advances in
  Neural Information Processing Systems}, vol.~30, 2017.

\bibitem{yin2018byzantine}
D.~Yin, Y.~Chen, R.~Kannan, and P.~Bartlett, ``Byzantine-robust distributed
  learning: Towards optimal statistical rates,'' in \emph{Proceedings of the
  35th International Conference on Machine Learning}, ser. Proceedings of
  Machine Learning Research, vol.~80, 2018, pp. 5650--5659.

\bibitem{khanadaptive}
M.~I. Khan, M.~Jafaritadi, E.~Alhoniemi, E.~Kontio, and S.~A. Khan, ``Adaptive
  weight aggregation in federated learning for brain tumor segmentation,'' in
  \emph{Brainlesion: Glioma, Multiple Sclerosis, Stroke and Traumatic Brain
  Injuries}, A.~Crimi and S.~Bakas, Eds.\hskip 1em plus 0.5em minus 0.4em\relax
  Cham: Springer International Publishing, 2022, pp. 455--469.

\bibitem{10.1007/11787006_1}
C.~Dwork, ``Differential privacy,'' in \emph{Automata, Languages and
  Programming}, 2006, pp. 1--12.

\bibitem{Dwork2013}
C.~Dwork and A.~Roth, ``The algorithmic foundations of differential privacy,''
  \emph{Foundations and Trends in Theoretical Computer Science}, vol.~9, no.
  3--4, pp. 211--407, 2014.

\bibitem{Dwork2011}
C.~Dwork, ``A firm foundation for private data analysis,'' \emph{Communications
  of the ACM}, vol.~54, no.~1, pp. 86--95, 2011.

\bibitem{abadi2016deep}
M.~Abadi, A.~Chu, I.~Goodfellow, H.~B. McMahan, I.~Mironov, K.~Talwar, and
  L.~Zhang, ``Deep learning with differential privacy,'' in \emph{Proceedings
  of the 2016 {ACM} {SIGSAC} Conference on Computer and Communications
  Security}, 2016, pp. 308--318.

\bibitem{geyer2017differentially}
R.~C. Geyer, T.~Klein, and M.~Nabi, ``Differentially private federated
  learning: A client level perspective,'' in \emph{NIPS Workshop on Private
  Multi-Party Machine Learning}, 2017.

\bibitem{DBLP:journals/corr/abs-1905-02383}
J.~Dong, A.~Roth, and W.~J. Su, ``Gaussian differential privacy,'' \emph{CoRR},
  vol. abs/1905.02383, 2019.

\bibitem{mironov2017renyi}
I.~Mironov, ``R{\'e}nyi differential privacy,'' in \emph{2017 IEEE 30th
  Computer Security Foundations Symposium (CSF)}.\hskip 1em plus 0.5em minus
  0.4em\relax IEEE, 2017, pp. 263--275.

\bibitem{khan2024electioncollaboratorsreinforcementlearning}
M.~I. Khan, E.~Kontio, S.~A. Khan, and M.~Jafaritadi, ``Election of
  collaborators via reinforcement learning for federated brain tumor
  segmentation,'' \emph{arXiv preprint arXiv:2412.20253}, 2024.

\bibitem{khan2024recommenderenginedrivenclient}
------, ``Recommender engine driven client selection in federated brain tumor
  segmentation,'' \emph{arXiv preprint arXiv:2412.20250}, 2024.

\bibitem{kaissis2020}
G.~A. Kaissis, M.~R. Makowski, D.~R{\"u}ckert, and R.~F. Braren, ``Secure,
  privacy-preserving and federated machine learning in medical imaging,''
  \emph{Nature Machine Intelligence}, vol.~2, no.~6, pp. 305--311, 2020.

\bibitem{bernstein2018signsgd}
J.~Bernstein, Y.-X. Wang, K.~Azizzadenesheli, and A.~Anandkumar, ``signsgd:
  Compressed optimisation for non-convex problems,'' in \emph{International
  conference on machine learning}.\hskip 1em plus 0.5em minus 0.4em\relax PMLR,
  2018, pp. 560--569.

\bibitem{jin2025noisy}
R.~Jin and H.~Dai, ``Noisy signsgd is more differentially private than you
  (might) think,'' in \emph{Forty-second International Conference on Machine
  Learning}, 2025.

\bibitem{eisenmann2023biomedicalimageanalysiscompetitions}
L.~Maier-Hein \emph{et~al.}, ``Biomedical image analysis competitions: The
  state of current participation practice,'' \emph{arXiv preprint
  arXiv:2212.08568}, 2023.

\bibitem{kleihues1993new}
P.~Kleihues, P.~C. Burger, and B.~W. Scheithauer, ``The new {WHO}
  classification of brain tumours,'' \emph{Brain Pathology}, vol.~3, no.~3, pp.
  255--268, 1993.

\bibitem{hanif2017glioblastoma}
F.~Hanif, K.~Muzaffar, K.~Perveen, S.~M. Malhi, and S.~U. Simjee,
  ``Glioblastoma multiforme: A review of its epidemiology and pathogenesis
  through clinical presentation and treatment,'' \emph{Asian Pacific Journal of
  Cancer Prevention}, vol.~18, no.~1, p.~3, 2017.

\bibitem{bakas_segmentation_2017}
S.~Bakas, H.~Akbari, A.~Sotiras, M.~Bilello, M.~Rozycki, J.~Kirby, J.~Freymann,
  K.~Farahani, and C.~Davatzikos, ``Segmentation labels for the pre-operative
  scans of the {TCGA-GBM} collection,'' 2017, dataset.

\bibitem{bakas_segmentation_2017-1}
------, ``Segmentation labels for the pre-operative scans of the {TCGA-LGG}
  collection,'' 2017, dataset.

\bibitem{ronneberger_u-net_2015}
O.~Ronneberger, P.~Fischer, and T.~Brox, ``U-net: Convolutional networks for
  biomedical image segmentation,'' in \emph{Medical Image Computing and
  Computer-Assisted Intervention ({MICCAI})}, 2015, pp. 234--241.

\bibitem{spyridon_bakas_2022_6362409}
S.~Bakas, S.~Pati, M.~Sheller, A.~Karargyris, P.~Mattson, B.~Edwards, U.~Baid,
  Y.~Chen, R.~T. Shinohara, J.~Martin, B.~Menze, M.~Zenk, K.~Maier-Hein,
  R.~Floca, A.~Reinke, L.~Maier-Hein, F.~Isensee, D.~Zimmerer, and Y.~Chen,
  ``The federated tumor segmentation ({FeTS}) challenge 2022,'' 2022, zenodo /
  challenge description.

\bibitem{mugunthan_peraire_bueno_kagal_2020}
V.~Mugunthan, A.~Peraire-Bueno, and L.~Kagal, ``Privacyfl: A simulator for
  privacy-preserving and secure federated learning,'' \emph{arXiv preprint
  arXiv:2002.08423}, 2020.

\end{thebibliography}

\clearpage
\FloatBarrier
\onecolumn

\section*{Supplementary Material}
This section provides the formal sensitivity analysis for the DP-SimAgg aggregation mechanism and additional architectural diagrams and workflow illustrations.
 
\subsection*{S1.\; Sensitivity Analysis of DP-SimAgg}
 
The following proposition establishes a closed-form \emph{conservative} upper bound on the L2 sensitivity of the clipped similarity-weighted aggregation, valid for any normalized weight function. This bound characterises the worst-case change in the aggregated output when one collaborator's contribution is replaced, and provides the theoretical grounding for the noise calibration performed in Algorithm~1.
 
\begin{proposition}[Conservative sensitivity bound for clipped similarity-weighted aggregation]
\label{prop:sensitivity}
Let
\[
A(T_1,\dots,T_n) = \sum_{i=1}^{n} \alpha_i(T)\,T_i
\]
denote the aggregation output, where each clipped collaborator update satisfies $\|T_i\|_2 \leq C$, and the normalized weights satisfy
\[
\alpha_i(T) \geq 0, \qquad \sum_{i=1}^{n} \alpha_i(T) = 1.
\]
Define two neighboring datasets $T = (T_1,\dots,T_n)$ and $T' = (T_1,\dots,T_k',\dots,T_n)$ differing only in the contribution of collaborator $k$, with $\|T_k'\|_2 \leq C$. Then the $\ell_2$-sensitivity of $A$ satisfies
\[
\Delta f \;=\; \max_{T \sim T'} \|A(T) - A(T')\|_2 \;\leq\; 4C.
\]
\end{proposition}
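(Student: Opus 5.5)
The plan is to bound $\|A(T)-A(T')\|_2$ directly, using only the facts that each output is a convex combination of vectors in the ball of radius $C$ and that the two profiles differ in a single coordinate. The one feature that needs care is that the weights $\alpha_i(\cdot)$ depend on the data. Replacing $T_k$ by $T_k'$ therefore changes the weight on \emph{every} collaborator, not only the $k$-th term, which rules out a naive argument of the form ``only one summand changes''. The proof must be uniform over all admissible weight maps, i.e.\ use nothing beyond nonnegativity and normalization. This uniformity is precisely why the stated bound holds ``for any normalized weight function'', and it is the step I expect to be the main obstacle.

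The cleanest route gives a bound stronger than the statement. By the triangle inequality and $\alpha_i(T)\ge 0$, $\sum_i\alpha_i(T)=1$,
\[
\|A(T)\|_2 \le \sum_{i=1}^n \alpha_i(T)\,\|T_i\|_2 \le C \sum_{i=1}^n \alpha_i(T) = C .
\]
The same holds for $T'$, since every entry of $T'$, including $T_k'$, lies in the $C$-ball. Hence $\|A(T)-A(T')\|_2 \le \|A(T)\|_2+\|A(T')\|_2 \le 2C \le 4C$. Taking the maximum over neighbouring pairs gives $\Delta f\le 4C$.

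To explain where the constant $4C$ comes from, and to isolate the data-dependence of the weights explicitly, I would also present the decomposition
\[
A(T)-A(T') = \sum_{i=1}^n \alpha_i(T)\,(T_i-T_i') \;+\; \sum_{i=1}^n \bigl(\alpha_i(T)-\alpha_i(T')\bigr)\,T_i' .
\]
The first sum has only the $k$-th term nonzero. It is bounded by $\alpha_k(T)\,\|T_k-T_k'\|_2 \le 2C$. The second sum is bounded by $C\,\|\alpha(T)-\alpha(T')\|_1$, and $\|\alpha(T)-\alpha(T')\|_1 \le \|\alpha(T)\|_1+\|\alpha(T')\|_1 = 2$, since both weight vectors lie on the probability simplex. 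This yields $2C+2C=4C$. The first term is the ``direct'' effect of changing the $k$-th update. The second is the ``indirect'' effect of reweighting, which is exactly the similarity-weighting contribution in Algorithm~1. This presentation shows that the indirect term is controlled without any Lipschitz property of the similarity map.

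Finally, I would remark that the substitute $\mathbf{1}\cdot C$ used in Algorithm~1 is not itself in the $C$-ball, because its norm is $C\sqrt{d}$ in dimension $d$. The proposition therefore concerns the idealized neighbouring relation $\|T_k'\|_2\le C$, and it should be stated as applying to that relation rather than to the empirical estimate. Under the Gaussian-mechanism calibration of Section~\ref{sec:method}, replacing $\Delta f$ by $4C$ (or $2C$) gives a valid $(\epsilon_0,\delta_0)$-DP guarantee per round.
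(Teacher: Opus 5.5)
Your proof is correct, but your primary argument differs from the paper's and gives a sharper constant.

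The paper's proof is exactly the decomposition you present second. It adds and subtracts $\sum_i \alpha_i(T)\,T_i'$, bounds the direct term $\alpha_k(T)\|T_k-T_k'\|_2$ by $2C$ and the reweighting term by $C\sum_i(\alpha_i(T)+\alpha_i(T'))=2C$, and concludes $4C$.

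Your main route uses only that each output is a convex combination of points in the $C$-ball. Hence $\|A(T)\|_2\le C$ and $\|A(T')\|_2\le C$, so the difference is at most $2C$. This constant is attained over the class of weight maps the proposition allows: put all weight on collaborator $k$ and take $T_k=Cu$, $T_k'=-Cu$ for a unit vector $u$. The paper's $4C$ is therefore loose by a factor of two, an artefact of bounding the two terms separately.

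Your argument also never uses that $T$ and $T'$ differ in a single coordinate, or even that they have the same length. It therefore also covers the ``removed'' neighbouring relation in the paper's DP definition, which the index-by-index decomposition does not directly handle. The flip side is that it shows the bound is just the diameter of the $C$-ball, independent of the neighbouring structure.

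What the paper's decomposition buys is structure rather than sharpness. It isolates the reweighting effect, which vanishes for data-independent weights, leaving $2C\,\alpha_k(T)$ (e.g.\ $2C/n$ for uniform weights). It could also be tightened given a Lipschitz bound on the similarity weights. The convexity bound is already worst case over all weight maps and offers no such handle.

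Your remark on the substitute $\mathbf{1}\cdot C$ in Algorithm~1 is accurate. Note also that Algorithm~1 reuses $W_{norm}$ for the neighbour, so its estimate omits the reweighting term entirely.

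Your closing sentence, however, goes beyond what the calibration supports. The classical Gaussian-mechanism theorem certifies $\sigma=\Delta f\sqrt{2\ln(1.25/\delta_0)}/\epsilon_0$ only for $\epsilon_0<1$. At $\epsilon_0=10$, $\delta_0=10^{-5}$ it fails whenever the calibrated sensitivity is actually attained: the exact privacy profile then gives $\delta\approx 2.3\times10^{-5}$. So calibrating with your tight $2C$ does not certify $(10,10^{-5})$-DP for the class of weight maps in the proposition. Calibrating with $4C$ suffices there only because of its factor-two slack, not by the theorem you invoke.
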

 
\begin{proof}
Decompose the difference by adding and subtracting $\sum_i \alpha_i(T)\,T_i'$:
\[
A(T) - A(T')
= \underbrace{\sum_{i=1}^{n} \alpha_i(T)(T_i - T_i')}_{\text{(I)}}
+ \underbrace{\sum_{i=1}^{n} \bigl(\alpha_i(T) - \alpha_i(T')\bigr)\,T_i'}_{\text{(II)}}.
\]
Since $T_i = T_i'$ for all $i \neq k$, term~(I) reduces to $\alpha_k(T)(T_k - T_k')$. Applying the triangle inequality to both terms:
\[
\|A(T) - A(T')\|_2
\leq \underbrace{\alpha_k(T)\|T_k - T_k'\|_2}_{\text{bound (I)}}
+ \underbrace{\sum_{i=1}^{n} |\alpha_i(T) - \alpha_i(T')|\,\|T_i'\|_2}_{\text{bound (II)}}.
\]
 
\noindent\textbf{Bounding term (I).} Using $\|T_k\|_2 \leq C$, $\|T_k'\|_2 \leq C$, and $\alpha_k(T) \leq 1$:
\[
\alpha_k(T)\|T_k - T_k'\|_2
\leq \alpha_k(T)\bigl(\|T_k\|_2 + \|T_k'\|_2\bigr)
\leq 2C\,\alpha_k(T)
\leq 2C.
\]
 
\noindent\textbf{Bounding term (II).} Since $\|T_i'\|_2 \leq C$ and $|\alpha_i(T) - \alpha_i(T')| \leq \alpha_i(T) + \alpha_i(T')$:
\[
\sum_{i=1}^{n} |\alpha_i(T) - \alpha_i(T')|\,\|T_i'\|_2 \leq C \sum_{i=1}^{n} (\alpha_i(T) + \alpha_i(T')) = 2C.
\]
 
\noindent Combining both bounds: $\|A(T) - A(T')\|_2 \leq 2C + 2C = 4C$.
\end{proof}
 
\begin{remark}[Conservative nature of the bound and relation to Algorithm~1]
\label{rem:sensitivity}
 
Algorithm~1 does not use the $4C$ bound for noise calibration. Instead, it estimates $\Delta f$ empirically at each round by evaluating $\|A(T) - A(T')\|_2$ for a simulated neighboring dataset $T'$ formed by replacing one collaborator tensor with a 
bounded adversarial tensor. This data-dependent estimate is substantially tighter and yields the noise levels underlying the privacy--utility results in Section~\ref{sec:results}. The analytical $4C$ bound provides a complementary guarantee. 
\end{remark}
\EOD

\end{document}